\documentclass[twoside,11pt]{article}
\pdfoutput=1

%

\usepackage{jmlr2e}
\usepackage[utf8]{inputenc}
\usepackage{amsfonts}
\usepackage{amsmath}
\usepackage{float}
\usepackage[titlenumbered,ruled]{algorithm2e}
\usepackage{graphicx}
\usepackage{subfigure}
\graphicspath{ {figures/} }

\usepackage[backend=bibtex, maxbibnames=8]{biblatex}
\addbibresource{bibliography.bib}
\usepackage[bookmarks=false]{hyperref}

\usepackage{enumitem}
\setitemize{itemsep=5pt,parsep=0pt, topsep=0pt}
\setenumerate{itemsep=5pt,parsep=0pt, topsep=0pt}

\newcommand{\R}{\mathbb{R}}

\newcommand{\norm}[1]{\left\lVert#1\right\rVert}

\newcommand{\dotp}[2]{\langle #1, #2 \rangle}

\newcommand{\summ}[2]{\sum_{#1 = 1}^{#2}}

\newcommand{\eqdef}{\mathrel{\stackrel{\makebox[0pt]{\mbox{\normalfont\tiny def}}}{=}}}



\newcommand{\E}{\mathbb{E}}




\ShortHeadings{IKA: Independent Kernel Approximator}{Ronchetti}
\firstpageno{1}

\setlength{\parindent}{0em}
\setlength{\parskip}{1em}

\begin{document}

\title{IKA: Independent Kernel Approximator}

\author{\name Matteo Ronchetti \\ \email matteo@ronchetti.xyz \thanks{This work is part of my graduation thesis. Many thanks to professor Stefano Serra Capizzano.}}

\maketitle

\begin{abstract}%
This paper describes a new method for low rank kernel approximation called IKA. The main advantage of IKA is that it produces a function $\psi(x)$ defined as a linear combination of arbitrarily chosen functions. In contrast the approximation produced by Nystr\"om method is a linear combination of kernel evaluations. The proposed method consistently outperformed Nyström method in a comparison on the STL-10 dataset.
Numerical results are reproducible using source code available at \url{https://gitlab.com/matteo-ronchetti/IKA}
\end{abstract}

\section{Introduction}
Consider the problem of low rank kernel approximation which consists of approximating a kernel $K: \R^n \times \R^n \rightarrow \R$ with a function $\psi: \R^n \rightarrow \R^m$ such that $K(x,y) \approx \dotp{\psi(x)}{\psi(y)}$. This problem arises when:
\begin{itemize}
    \item Using a kernel method \cite{shawe,learn_with_kernels} on a Machine Learning problem where the dataset size renders operating on the full Gram matrix practically unfeasible;
    \item One wants to use the map $\psi(x)$ as a feature map for the construction of a multilayer model (such as Convolutional Kernel Networks \cite{ckn,sckn}).
\end{itemize}
In this paper we propose a new method for low rank kernel approximation called IKA, which has the following characteristics:
\begin{itemize}
    \item It produces a function $\psi(x) \in Span\{b_1(x),b_2(x),\dots,b_n(x)\}$ where the basis functions $b_i(x)$ can be arbitrarily chosen; the basis $b_i(x)$ is independent from the approximated kernel; 
    \item It is conceptually similar to Nystr\"om method \cite{nystrom} but in our experiments IKA produced better results (Section \ref{results}).
\end{itemize}

\newpage
\section{Preliminaries}
Let $\{x_1, x_2, \dots,x_N\}$ with $x_i \in \R^d$ be a dataset sampled i.i.d from an unknown distribution with density $p(x)$.
This density defines an inner product between real valued functions in $\R^d$
$$
\dotp{f}{g} \eqdef  \int_{\R^d} f(x)g(x)p(x)dx \qquad \norm{f} \eqdef \sqrt{\dotp{f}{f}}.
$$
Therefore $p(x)$ defines an Hilbert space $\mathcal{H} = (\{f: \R^d \rightarrow \R | \norm{f} < \infty \},\dotp{\cdot}{\cdot})$.
\subsection{Kernel as a Linear Operator}
A symmetric positive (semi)definite kernel $K$ defines a self-adjoint linear operator over the Hilbert space $\mathcal{H}$
$$
Kf(x) \eqdef \int_{\R^d} K(x,y)f(y)p(y) dy.
$$
The eigenfunctions of $K$ satisfies the following properties
\begin{align}
\label{kernel_eig}
(K\phi_i)(x) &\eqdef \int k(x,y)\phi_i(y)p(y) dy = \lambda_i \phi_i(x) ,\\
\label{ortho}
\dotp{\phi_i}{\phi_j} &\eqdef \int \phi_j(x)\phi_i(x)p(x) dx = \delta_{ij} .
\end{align}
Because the kernel is symmetric positive (semi)definite its eigenvalues are real and positive.
By convention we consider the eigenvalues as sorted in decreasing order $\lambda_1 \geq \lambda_2 \geq \dots \geq 0$.
\subsection{Low Rank Kernel Approximation}
The goal of low rank kernel approximation is to find a function $\psi: \R^d \rightarrow \R^m$ such that the kernel can be approximated with a finite dimensional dot product
$$
K(x,y) \approx \dotp{\psi(x)}{\psi(y)} .
$$
A natural way to quantify the approximation error is to take the expected value of the point-wise squared error:
\begin{align*}
E &\eqdef \E[(K(x,y) - \dotp{\psi(x)}{\psi(y)})^2] \\
&= \int_{\R^d}\int_{\R^d} (K(x,y) - \dotp{\psi(x)}{\psi(y)})^2 p(x,y) dx dy \\
&= \int_{\R^d}\int_{\R^d} (K(x,y) - \dotp{\psi(x)}{\psi(y)})^2 p(x) p(y) dx dy.
\label{error_formula}
\end{align*}
Bengio, Vincent and Paiement have proved in \cite{kern_eigfunctions} that
$$
\psi(x) = \left(\sqrt{\lambda_1}\phi_1(x),\sqrt{\lambda_2}\phi_2(x), \dots, \sqrt{\lambda_m}\phi_m(x)\right)
$$
minimizes the error $E$. This motivates the use of the leading eigenfunctions for approximating a kernel.


\section{Proposed Method}
The main idea behind IKA is to project the leading eigenfunctions of the kernel on an ``approximation space'' $\mathcal{F}$.
Then, by obtaining an explicit formulation of the Rayleigh quotient
over the space $\mathcal{F}$, it is possible to find the projections of the leading eigenfunctions by solving a generalized eigenvalue problem. 

\subsection{Derivation of the Method}
Let $ \mathcal{F} = Span\{b_1(x),b_2(x),\dots,b_n(x)\}$, where $b_i(x)$ are chosen to be linearly independent, be the ``approximation space''. Given a function $f \in \mathcal{F}$ we identify with $\vec{f}$ the only set of weights such that:
$$
f(x) = \summ{i}{n} \vec{f}_ib_i(x).
$$
We prove (in appendix A) that:
\begin{align}
\dotp{f}{g} &= \summ{i,j}{n} \vec{f}_i P_{ij} \vec{g}_j \eqdef \dotp{\vec{f}}{\vec{g}}_P, \\
\dotp{Kf}{f} &= \summ{i,j}{n} \vec{f}_i M_{ij} \vec{f}_j \eqdef \dotp{\vec{f}}{\vec{f}}_M
\end{align}
where
\begin{align*}
    P_{ij} &\eqdef \int_{\R^d}  b_i(x)b_j(x)p(x) dx, \\
    M_{ij} &\eqdef \int_{\R^d}\int_{\R^d} K(x,y) b_i(x)b_j(y) p(x)p(y) dydx.
\end{align*}
Given these results it is possible to write an explicit formulation of the Rayleigh quotient
$$
R(f) \eqdef\frac{\dotp{Kf}{f}}{\dotp{f}{f}} = \frac{\dotp{\vec{f}}{\vec{f}}_M}{\dotp{\vec{f}}{\vec{f}}_P},
$$
where $M,P \in \mathbb{M}_n(\R)$, $P$ is symmetric positive definite and $M$ is symmetric positive (semi)definite.
As a consequence the leading eigenfunctions can be approximated by solving the following generalized eigenproblem
\begin{align}
    M \vec{f} = \lambda P \vec{f}
    \label{eigenproblem}
\end{align}
which can be solved by many existing methods (see \cite{eigenproblem} and references therein).

\subsection{Numerical Approximation of P and M}
It is possible to approximate the matrices $P$ and $M$ with $\widetilde{P}$ and $\widetilde{M}$ by approximating the unknown density $p(x)$ with the empirical data density. Let $B \in \R^{N\times n}$ be a matrix with elements $B_{hi} = b_i(x_h)$ and assume it to have full rank. This assumption is reasonable because the functions $b_i$ are linearly independent therefore it is always possible to satisfy this assumption by providing enough sample points. 
\begin{align*}
    P_{ij} &\approx \frac{1}{N} \summ{h}{N} b_i(x_h)b_j(x_h) & & P \approx \frac{B^T B}{N} \eqdef \widetilde{P}, \\
    M_{ij} &\approx \frac{1}{N^2} \summ{h,k}{N} K(x_h,x_k) b_i(x_h)b_j(x_k) & &
    M \approx \frac{ B^T G B}{N^2} \eqdef \widetilde{M}.    
\end{align*}
Because the Gram matrix $G$ is symmetric positive (semi)definite and the matrix $B$ has full rank, $\widetilde{P}$ is symmetric positive definite and $\widetilde{M}$ is symmetric positive (semi)definite.
Therefore the leading eigenfunctions of the kernel $K$ can be approximated by solving the eigenproblem:
$$
\widetilde{M} \vec{f} = \lambda \widetilde{P} \vec{f}.
$$

\subsection{Implementation of the Proposed Method}
When dealing with large datasets computing the full Gram matrix $G \in \mathbb{M}_N(\R)$ can be unfeasible. Therefore to compute $\widetilde{P}$ and $\widetilde{M}$ we randomly sample $S$ points from the dataset.

\begin{algorithm}[H]
\SetAlgoLined
\SetKwInOut{Input}{input}
\SetKwInOut{Output}{output}
\Input{A positive (semi)definite kernel $K(x,y)$, a set of sampling points $\{y_i\}_{i=1}^S$ drawn randomly from the dataset and a set of basis functions $\{b_i(x)\}_{i=1}^n$}
\Output{A function $\psi(x)$ such that $K(x,y) \approx \dotp{\psi(x)}{\psi(y)}$}
\BlankLine
 compute the Gram matrix $G$ by setting $G_{i,j} = K(y_i, y_j)$ \;
 compute the matrix $B$ by setting $B_{ij} = b_j(y_i)$ \;
 $\widetilde{P} \gets \frac{B^T B}{S}$ \; 
 $\widetilde{M} \gets \frac{B^T G B}{S^2}$ \; 
 $(\lambda_1, \lambda_2, \dots, \lambda_n), (v^{(1)}, v^{(2)}, \dots, v^{(n)}) \gets \text{solve\_eigenproblem}(\widetilde{M},\widetilde{P})$ \;
\Return $\psi(x) = \left(\sqrt{\lambda_i} \summ{j}{n} v^{(i)}_j b_j(x) \right)_{i=1}^n$
 \caption{IKA}
\end{algorithm}

Because the sample size $S$ should be chosen to be $\gg n$, the numerical complexity of IKA is dominated by the operations on the matrix $G$. With a fixed number of filters $n$ IKA has numerical complexity of $O(S^2)$.

\section{Results}
\label{results}
We compare IKA against the Nystr\"om method \cite{nystrom} on the task of approximating the Gaussian kernel $K(x,y) = \exp\left(-\frac{\norm{x - y}^2}{2\sigma^2}\right)$ on a set of random patches sampled from the STL-10 dataset \cite{stl10}.
The parameter $\sigma^2$ is chosen to be the 10 percentile of $\norm{x -y}^2$ as in \cite{ckn}.
All the source code used to produce the results presented in this section and the full results of the experiments are available at \url{https://gitlab.com/matteo-ronchetti/IKA}. 

\subsection{Preprocessing}
\begin{enumerate}
    \item Each image $\widetilde{I}$ is normalized using Global Contrast Normalization:
    \begin{align*}
        I = \frac{\widetilde{I} - mean(\widetilde{I})}{\sqrt{var(\widetilde{I}) + 10}};
    \end{align*}
    \item 1'000'000 $7\times7$ patches are sampled at random locations from the images;
    \item PCA whitening is applied on the patches;
    \item Each patch is normalized to unit length.
\end{enumerate}
For IKA we separate the training and testing data with an 80/20 split.

\subsection{Effect of Sample Size}
We measure the effect of the sample size $S$ on the approximation error while using $n=128$ filters. \\
\begin{figure}[H]
\centering
\includegraphics[width=0.5\textwidth]{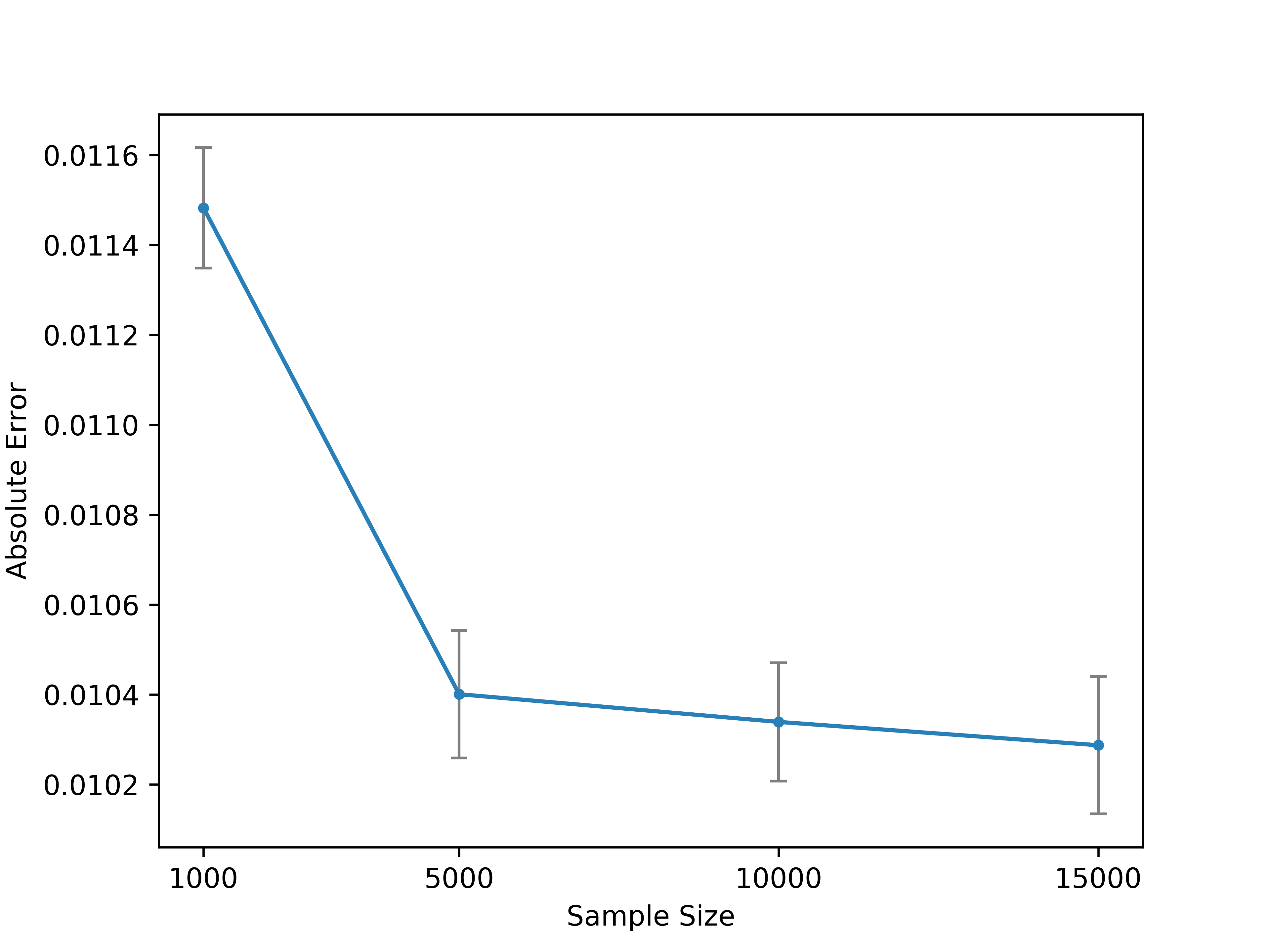} 
\end{figure}
As expected the use of a bigger sample size is beneficial. Notice that between 1'000 and 5'000 the error reduction is $\approx  9.4\%$ with approximately 25 times more operations. In contrast between 1'000 and 15'000 the error reduction is $\approx 10.4\%$ with approximately 225 times more operations. 

\subsection{Comparison with the Nystr\"om Method}
\subsubsection{Random Filters}
We compare IKA againts the Nystr\"om method. For IKA we use the best performing sample size ($S = 15000$), filters are chosen randomly between the sampled patches.
\begin{figure}[H]
\centering
\includegraphics[width=0.45\textwidth]{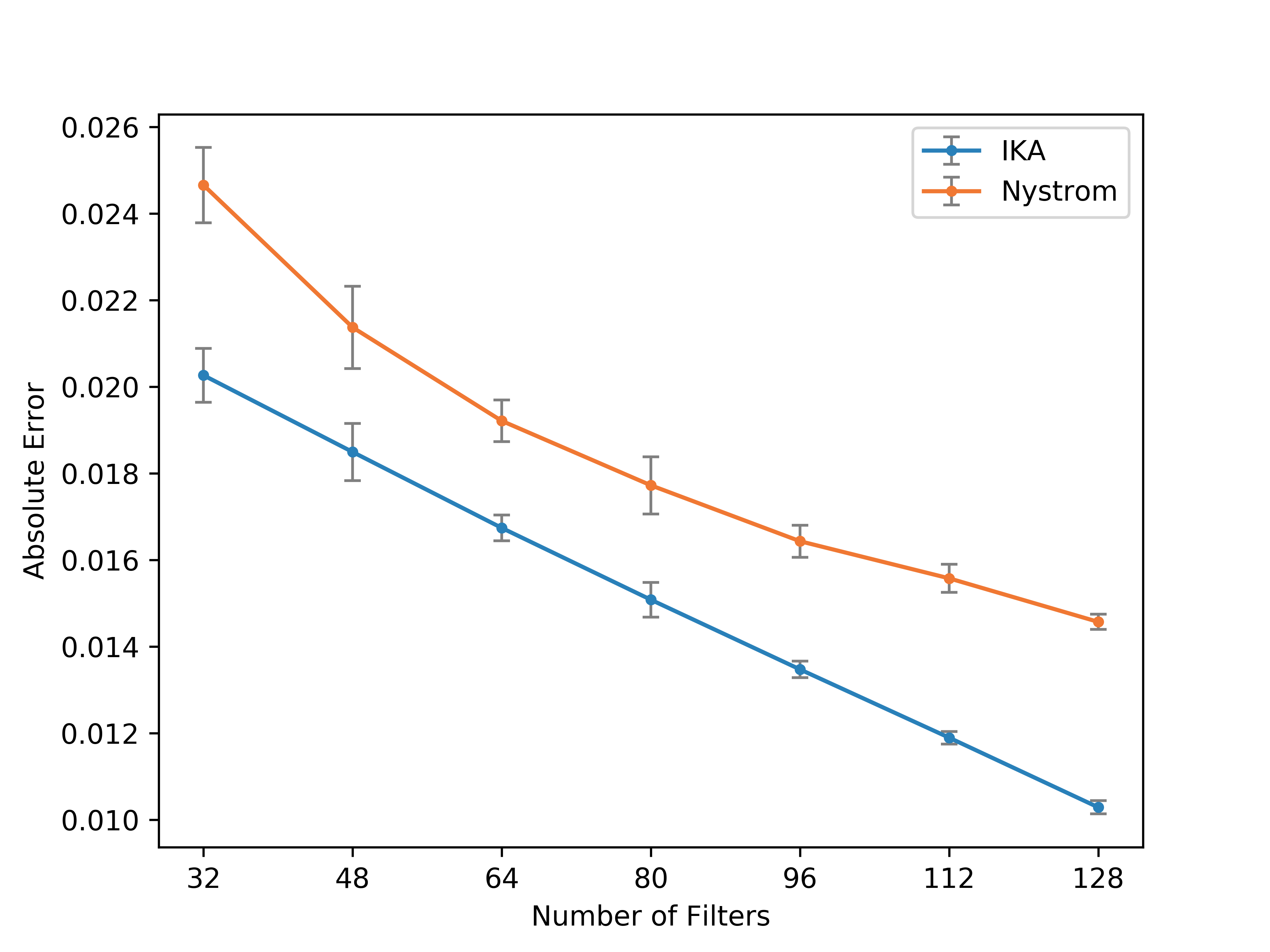} 
\includegraphics[width=0.45\textwidth]{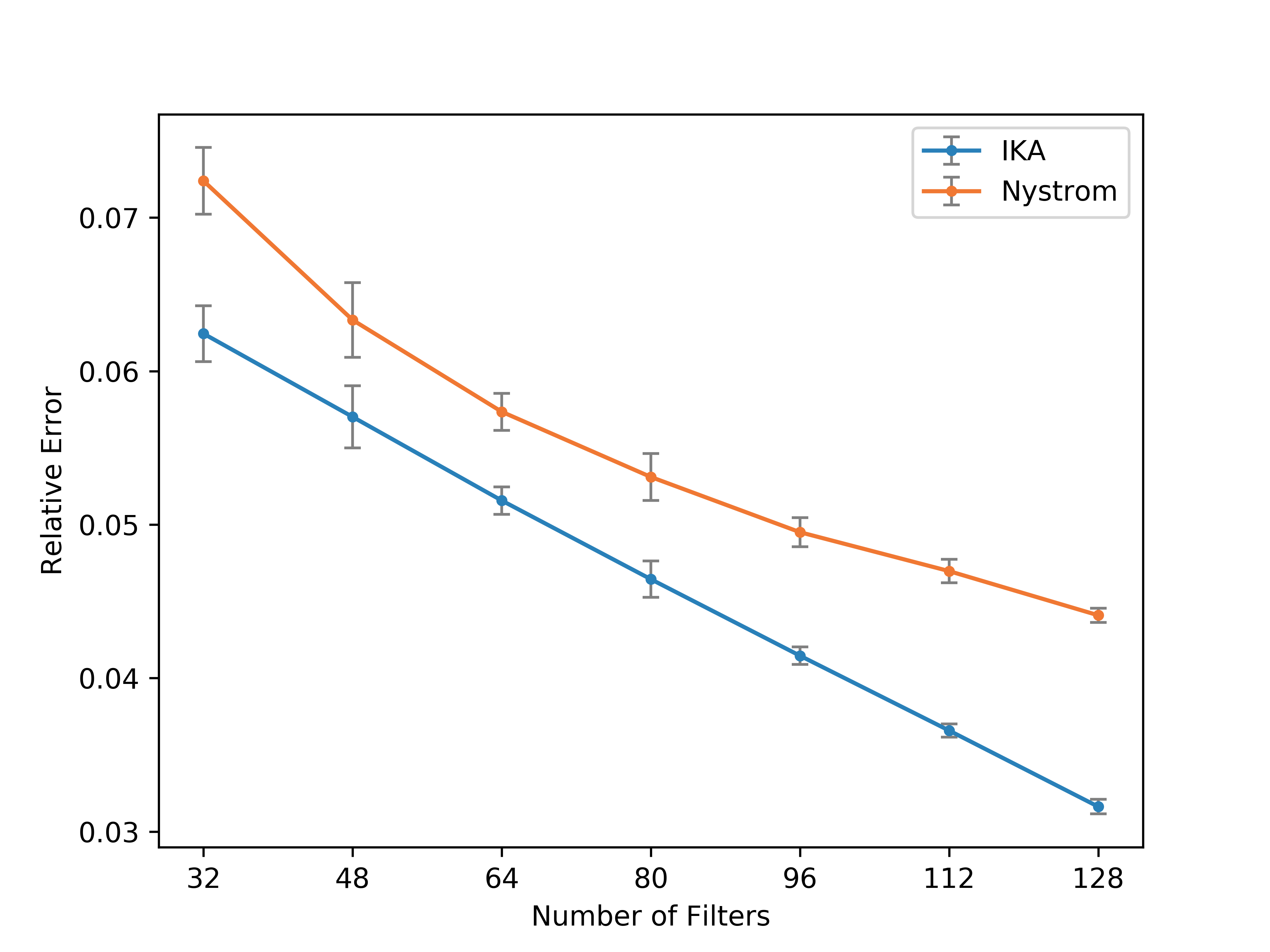} 
\end{figure}
The proposed method consistently outperforms the Nystr\"om method with mean reduction of absolute error of $\approx 18.6\%$.

\subsubsection{K-Means Filters}
It has been shown \cite{kmeans_nystrom} that choosing filters with k-means is beneficial for the accuracy of Nystr\"om method. Therefore we compare the methods using filters produced by mini-batch k-means and normalized to unit length.
\begin{figure}[H]
\centering
\includegraphics[width=0.45\textwidth]{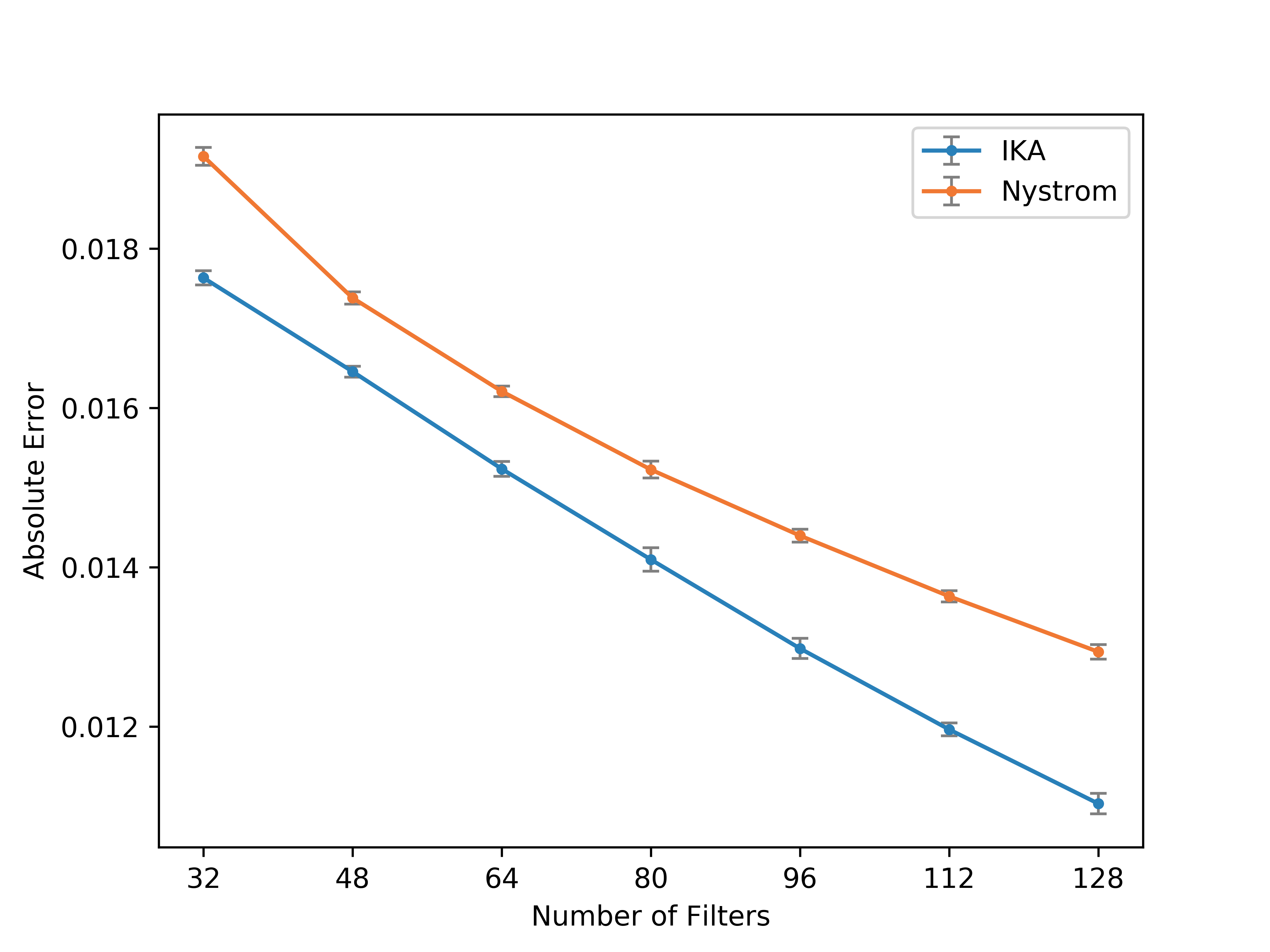} 
\includegraphics[width=0.45\textwidth]{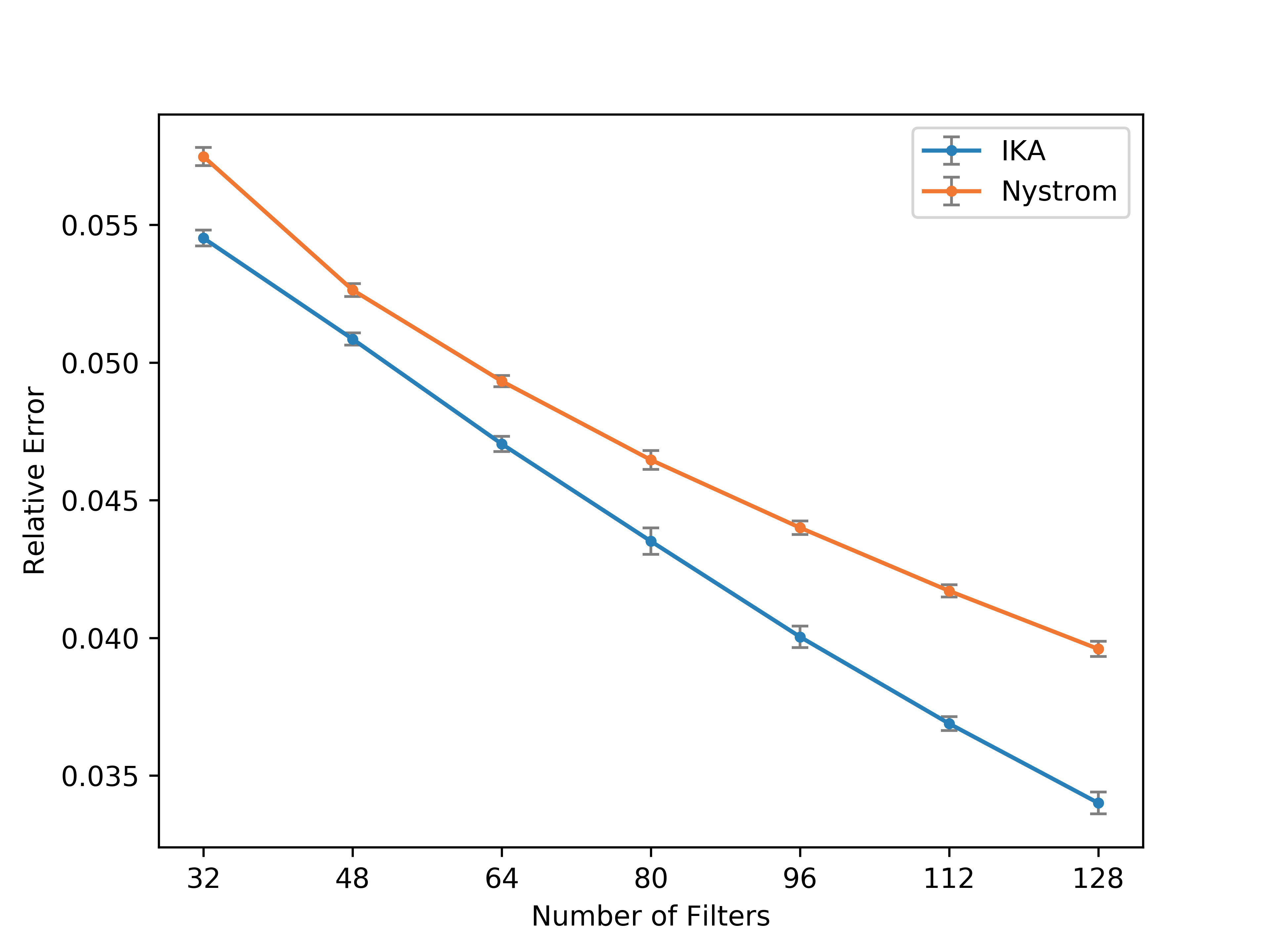} 
\end{figure}
The proposed method outperforms the Nystr\"om method with mean reduction of absolute error of $\approx 9.1\%$. The use of kmeans filters is more beneficial for Nystr\"om method than IKA.
Our intuition is that the the Nystr\"om method benefits from this choice of sampling points because they carry more information about the unknown density $p(x)$. In contrast IKA, which draws a bigger sample from  $p(x)$, is less affected by this benefit. 

\subsubsection{Effect of Number of Eigenfunctions}
In many practical applications it can be useful to fix the number of filters $n$ and only compute the first $m < n$ eigenfunctions.
Therefore we fix the number of filters to $n=128$ and observe the effect of $m$ on the approximation error.
\begin{figure}[h]
\centering
\includegraphics[width=0.5\textwidth]{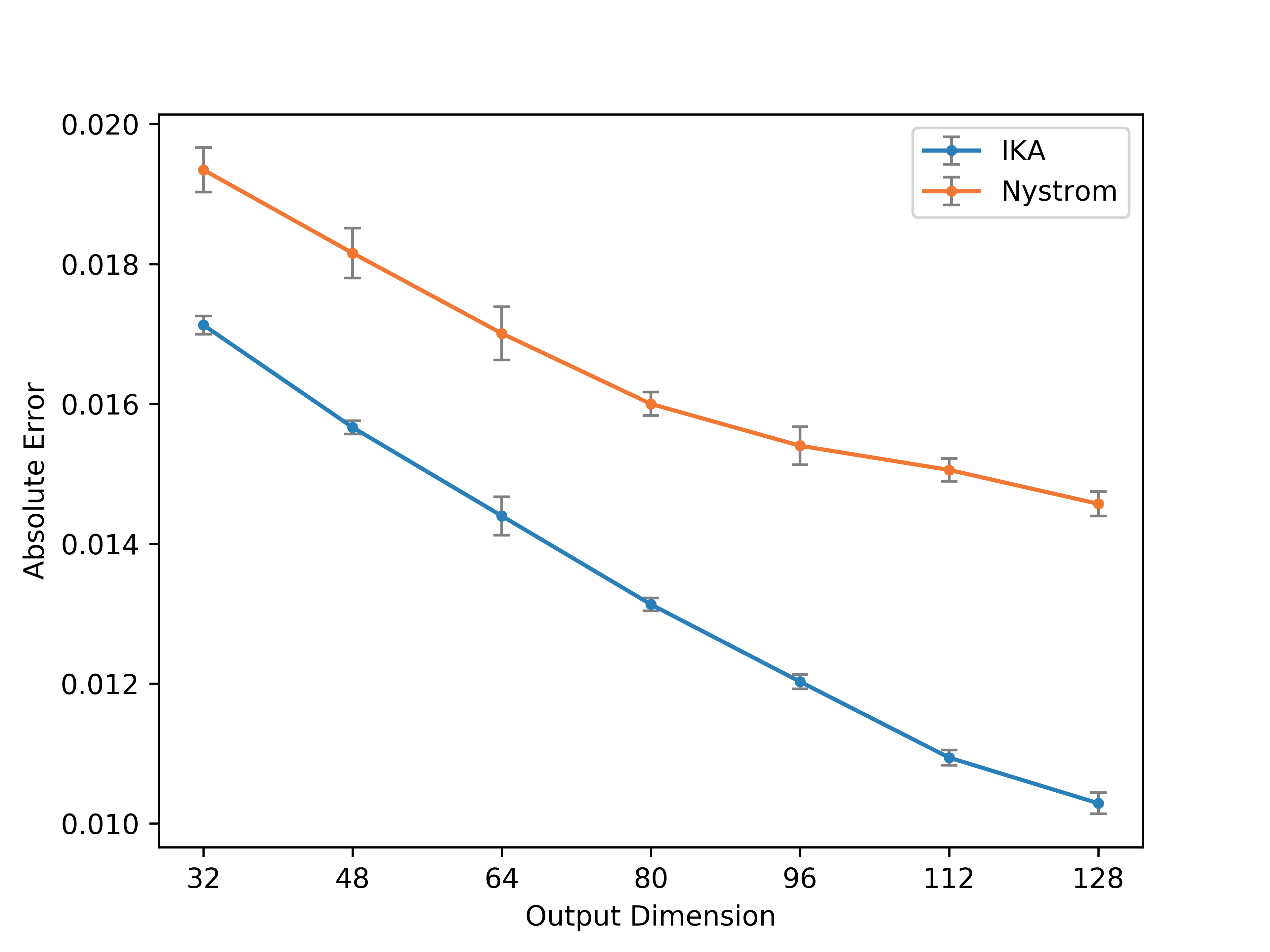} 
\end{figure}

\section{Related Work}
Related kernel approximation techniques include:
\begin{itemize}
    \item Techniques that exploit low-rank approximations of the kernel matrix such as Nystr\"om method \cite{nystrom, kmeans_nystrom};
    \item Random Fourier features
techniques for shift-invariant kernels \cite{random_features};
    \item Approximation through linear expansion of the Gaussian kernel \cite{ckn}.
\end{itemize}
See also \cite{serra} about eigenvalue distribution of kernel operators.

\newpage
\section{Conclusions}
We have proposed IKA, a new method for low rank kernel approximation. The key results described in this paper are:
\begin{itemize}
\item IKA produces a function $\psi(x) \in Span\{b_1(x),b_2(x),\dots,b_n(x)\}$ where the basis functions $b_i(x)$ can be arbitrarily chosen;
\item IKA outperformed the Nystr\"om method on a real world dataset, both when using random filters and filters chosen by kmeans.
\end{itemize}

The current work opens some future perspectives:
\begin{itemize}
    \item Study the use of different sets of basis functions. In particular using ReLu or Sigmoid activation functions;
    \item Design an algorithm that produces a good set of filters for IKA; 
    \item Study the performances of the proposed method on classification tasks (including multi-layer architectures).
\end{itemize}

\newpage
\printbibliography

\newpage
\appendix
\section*{Appendix A}
\begin{proposition}
The dot product (in $\mathcal{H}$) between two functions of $\mathcal{F}$ can be expressed as
$$\dotp{f}{g} = \summ{i,j}{n} \vec{f}_i P_{ij} \vec{g}_j$$
where the matrix $P \in \mathbb{M}_k(\R)$ is symmetric positive definite
\end{proposition}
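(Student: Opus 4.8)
The plan is to first establish the algebraic identity by a direct expansion, and then read off the two matrix properties---symmetry from the definition of $P$, and positive definiteness from the underlying inner product on $\mathcal{H}$. First I would write both arguments in the basis, $f(x) = \summ{i}{n}\vec{f}_i b_i(x)$ and $g(x) = \summ{j}{n}\vec{g}_j b_j(x)$, and substitute them into $\dotp{f}{g} = \int_{\R^d} f(x)g(x)p(x)\,dx$. Since the sums are finite, linearity of the integral lets me interchange summation and integration and pull the scalar weights outside, so that
\[
\dotp{f}{g} = \summ{i,j}{n} \vec{f}_i\vec{g}_j \int_{\R^d} b_i(x)b_j(x)p(x)\,dx = \summ{i,j}{n}\vec{f}_i P_{ij}\vec{g}_j,
\]
which is exactly the claimed identity with $P_{ij}$ as defined. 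Symmetry is then immediate: the integrand $b_i(x)b_j(x)p(x)$ is unchanged under swapping $i$ and $j$, so $P_{ij} = P_{ji}$. This entire step is purely mechanical.

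For definiteness I would apply the identity just proved with $g = f$. For any $\vec{f} \in \R^n$ this gives
\[
\dotp{\vec{f}}{\vec{f}}_P = \summ{i,j}{n}\vec{f}_i P_{ij}\vec{f}_j = \dotp{f}{f} = \norm{f}^2 \geq 0,
\]
so $P$ is at least positive semidefinite, with the nonnegativity inherited directly from the fact that $\norm{\cdot}$ is a genuine norm on $\mathcal{H}$.

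The main obstacle is establishing \emph{strictness}, i.e. that $\dotp{\vec{f}}{\vec{f}}_P = 0$ forces $\vec{f} = 0$. From the display above, $\dotp{\vec{f}}{\vec{f}}_P = 0$ means $\norm{f} = 0$, so $f = \summ{i}{n}\vec{f}_i b_i$ vanishes as an element of $\mathcal{H}$. The delicate point is that this is vanishing in the $\norm{\cdot}$ induced by $p$, rather than pointwise everywhere, so to conclude $\vec{f}=0$ I must invoke linear independence of the $b_i$ in the sense of $\mathcal{H}$---equivalently, that no nontrivial linear combination of the $b_i$ is $p$-almost-everywhere zero. Under the paper's standing assumption that the basis functions are linearly independent, interpreted in this sense, the relation $\summ{i}{n}\vec{f}_i b_i = 0$ in $\mathcal{H}$ yields $\vec{f} = 0$, which gives strict positivity and completes the proof.
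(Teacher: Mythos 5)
Your proof follows essentially the same route as the paper: expand $f$ and $g$ in the basis, use linearity of the integral to identify $P_{ij}$, and then read off symmetry and positive definiteness from $\dotp{\vec f}{\vec f}_P = \norm{f}^2$. If anything, you are more careful than the paper at the one delicate point---the paper simply asserts $\dotp{f}{f}>0$ for $f\neq 0$, whereas you correctly flag that passing from $\vec f \neq 0$ to $\norm{f}>0$ requires the $b_i$ to be linearly independent as elements of $\mathcal{H}$ (no nontrivial combination vanishing $p$-almost everywhere), which is the right way to make the argument airtight.
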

\begin{proof}
\begin{align*}
\dotp{f}{g} &\eqdef \int_{\R^d} f(x)g(x)p(x)dx \\
&= \int_{\R^d}\left(\summ{i}{n}b_i(x)\vec{f}_i\right)\left(\summ{j}{n}b_j(x)\vec{g}_j\right)p(x)dx \\
&= \summ{i,j}{n} \vec{f}_i \vec{g}_j \underbrace{\int_{\R^d}  b_i(x)b_j(x)p(x) dx}_{= P_{i,j}} \\
&= \summ{i,j}{n} \vec{f}_i P_{i,j} \vec{g}_j \\
&\eqdef \dotp{\vec{f}}{\vec{g}}_P .
\end{align*}
We proceed to prove that the matrix $P \in \mathbb{M}_n(\R)$ with $P_{ij} = \dotp{b_i}{b_j}$ is symmetric positive definite.
From the definition is clear that $P$ is symmetric. Furthermore:
$$
\forall f \neq 0 \qquad
\dotp{\vec{f}}{\vec{f}}_P = \dotp{f}{f} \eqdef \int_{\R^d} f^2(x) p(x) dx > 0 .
$$
\end{proof}

\begin{proposition}
The dot product (in $\mathcal{H}$) between $f \in \mathcal{F}$ and $Kf$ can be expressed as
$$
\dotp{Kf}{f} = \summ{i,j}{n} \vec{f}_i M_{ij} \vec{f}_j
$$
\end{proposition}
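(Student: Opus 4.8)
The plan is to mirror the derivation of Proposition 1, unfolding the definition of the operator $K$ and of the inner product on $\mathcal{H}$ until the coefficients $\vec{f}_i$ factor out of a double integral whose kernel is precisely $M_{ij}$.

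First I would write $\dotp{Kf}{f}$ using the definition of the inner product in $\mathcal{H}$, obtaining $\int_{\R^d} (Kf)(x)\, f(x)\, p(x)\, dx$. Then I would substitute the definition of the operator action $(Kf)(x) = \int_{\R^d} K(x,y) f(y) p(y)\, dy$, which turns the expression into the double integral $\int_{\R^d}\int_{\R^d} K(x,y) f(x) f(y)\, p(x) p(y)\, dy\, dx$. At this stage the computation is completely symmetric to the one in the first proposition, only with a second integration variable and the kernel $K(x,y)$ inserted.

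Next I would replace $f(x)$ and $f(y)$ by their expansions $\summ{i}{n} \vec{f}_i b_i(x)$ and $\summ{j}{n} \vec{f}_j b_j(y)$, pull the two finite sums outside the integral, and collect coefficients. This yields $\summ{i,j}{n} \vec{f}_i \vec{f}_j \int_{\R^d}\int_{\R^d} K(x,y) b_i(x) b_j(y)\, p(x) p(y)\, dy\, dx$, and the inner double integral is exactly $M_{ij}$ by definition, giving the claimed identity $\dotp{Kf}{f} = \summ{i,j}{n} \vec{f}_i M_{ij} \vec{f}_j \eqdef \dotp{\vec{f}}{\vec{f}}_M$.

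The calculation is routine and, as for Proposition 1, I do not expect a genuine obstacle; the only point deserving a word of care is the interchange of the finite summation with the integration (justified by linearity) and, if one wishes to be fully rigorous about the iterated integral, an appeal to Fubini's theorem under the integrability conditions already implicit in working inside $\mathcal{H}$. As a free byproduct, the symmetry $K(x,y) = K(y,x)$ makes $M$ symmetric, and since $K$ is positive (semi)definite one has $\dotp{\vec{f}}{\vec{f}}_M = \dotp{Kf}{f} \geq 0$, so $M$ is positive (semi)definite as well, exactly as asserted in the main text.
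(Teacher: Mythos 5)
Your proof is correct and takes essentially the same route as the paper's: unfold the definition of $Kf$ and of the inner product, expand $f$ in the basis $b_i$, and interchange the finite sums with the integrals so that the remaining double integral is exactly $M_{ij}$. The only cosmetic difference is that you form the full double integral before expanding both copies of $f$, whereas the paper expands them one at a time; your added remarks on Fubini and on the symmetry and positive semidefiniteness of $M$ are correct but not required for the statement.
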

\begin{proof}
The definition of $Kf$ can be expanded:
\begin{align*}
Kf(x) &\eqdef  \int_{\R^d} K(x,y)f(y)p(y)dy \\
&= \summ{i}{n} \vec{f}_i \int_{\R^d} K(x,y) b_i(y) p(y) dy .\end{align*}
Using this result together with the definition of dot product in $\mathcal{F}$ it is possible to obtain an explicit formulation for $\dotp{Kf}{f}$:
\begin{align*}
\dotp{Kf}{f} &\eqdef \int_{\R^d} Kf(x)f(x)p(x)dx \\
&= \summ{i}{n} \vec{f}_i \int_{\R^d} Kf(x)b_i(x) p(x)dx \\
&= \summ{i}{n}\summ{j}{n} \vec{f}_i \vec{f}_j \underbrace{\int_{\R^d}\int_{\R^d} K(x,y) b_i(x)b_j(y) p(x)p(y) dydx}_{=M_{ij}} \\
&= \dotp{\vec{f}}{\vec{f}}_M .
\end{align*}
\end{proof}

\end{document}